\newenvironment{proof}{\paragraph{Proof:}}{\hfill$\square$}
\newtheorem{theorem}{Theorem}
\newtheorem{lemma}[theorem]{Lemma}
\newtheorem{definition}[theorem]{Definition}
\title{Multiplayer Information Asymmetric Bandits in Metric Spaces}
\date{} 					% Or removing it
\author{ William Chang \\
	Department of Applied Mathematics\\
	University of California, Los Angeles\\
	Los Angles, CA \\
	\url{https://williamc.me/} \\
	%% examples of more authors
	\And
     Aditi Karthik \\
	Department of Psychology\\
	University of California, Los Angeles\\
	Los Angles, CA \\
	%% examples of more authors
	\And
}
\begin{document}
\maketitle

\begin{abstract}
  In recent years the information asymmetric Lipschitz bandits  In this paper we studied the Lipschitz bandit problem applied to the multiplayer information asymmetric problem studied in \cite{chang2022online, chang2023optimal}. More specifically we consider information asymmetry in rewards, actions, or both. We adopt the CAB algorithm given in \cite{kleinberg2004nearly} which uses a fixed discretization to give regret bounds of the same order (in the dimension of the action) space in all 3 problem settings. We also adopt their zooming algorithm \cite{ kleinberg2008multi}which uses an adaptive discretization and apply it to information asymmetry in rewards and information asymmetry in actions. 
\end{abstract}

\section{Introduction}
The multi-armed bandit problem is a quintessential problem in reinforcement learning when an agent is met with a (finite) set of actions to choose from. Taking an action results in a fixed $1$-subgaussian stochastic reward in the interval $[0, 1]$, and the goal is to maximize the expected rewards across some number of rounds. Since the rewards are bounded, so are their means, so we can suppose each arm $a$ has some true mean $\mu_a \in [0, 1]$. Maximizing the expected rewards then amounts to pulling arms with the highest $\mu_a$ as frequently as possible. 

There has been interest in a different kind of bandit setting where the actions are no longer assumed to be finite. this is called the Lipschitz bandit setting where actions are now indexed by some subset of the Euclidean space. In this setting algorithms designed for the finite set of actions (e.g. UCB \cite{lai1985asymptotically}) no longer work as the UCB algorithm necessitates pulling each arm of the infinitely many arms at least once. However, in the Lipschitz bandit setting, for two actions $a$ and $b$ one can bound the absolute difference between their true means $|\mu_a - \mu_b|$ by some constant (the Lipschitz constant) multiplied by the distance between $a$ and $b$. The Lipschitz condition is explicitly given in equation \eqref{lipschitz}. Empirically this means that if one has a good estimate for the mean of a particular arm, then it also acts as a good estimate for the arms nearby.

In the single-player setting the classical lipshictz bandits problem was studied by \cite{kleinberg2004nearly}. They solved this problem using a fixed discretization where the action space is evenly split into a finite number of actions \texttt{CAB}. On this discretized set they apply the standard UCB algorithm \cite{lai1985asymptotically}. \cite{kleinberg2008multi} proposes a more efficient algorithm called the zooming algorithm. In comparison the the fixed discretziation, the zooming algorithm discretizes regions that are more likely to have arms with better rewards more finely. They prove that this algorithm achieves $O(T^{\frac{d+1}{d+2}})$ where $d$ is the covering dimension of the space with a matching lower bound.

There has recently been a rise in multiplayer bandits which have seen applications in radio networks. In many of these multiplayer bandit works they either assume a graph structure that allows communication with other players or they don't consider joint actions. The line of multiplayer work that we are interested in this paper is in the information asymmetric bandit setting \cite{chang2022online, chang2023optimal}. In those settings, each the players are each given their own set of actions. At every round, the players pull their own action simultaneously which results in a \emph{joint} action. The action space is thus exponential in size with respect to the number of players. In this setting, we are interested in two main kinds of information asymmetry. The first is information asymmetry in actions where players receive the same rewards but are unable to observe the actions of the other players.

\paragraph{Our contribution} 
In this paper we propose the first algorithm for the multiplayer information asymmetric Lipschitz bandits. We proposed 3 problems A, B, and C that involved information asymmetry in actions, rewards, and both, respectively. We show how to use uniform discretization as well as the multiagent algorithms proposed recently in \cite{chang2022online, chang2023optimal} to solve information asymmetry these 3 problems. We then modify the zooming algorithm to solve information asymmetry in only actions and only rewards. We were able to prove regret bounds on the same order as in the single-player case for Problems A and B while having a regret bound that is nearly optimal for problem C. 

\subsection{Related Works}

\paragraph{Lipschitz Bandits} There have been many works in Lipschitz bandits. Here we outline a couple of them.

Some papers study the Lipschitz bandits where the Lipschitz constant is not known. One of the earliest papers in this field is  \cite {jones1993lipschitzian} Solves Lipschitz optimization, finding the global minimum without the Lipschitz constant by searching for all possible constants. Modifies Shubert's algorithm to prevent overemphasis on global search and reduce computational. Another instance is in \cite{bubeck2011lipschitz} They solve the Lipschitz bandit problems with unknown Lipschitz values (by estimating the Lipschitz value before exploration-exploitation). Using worst-case regret bounds to minimize optimal order even when the Lipschitz value is unknown. \cite{lu2010contextual} Studies Lipschitz bandit problems of a metric space setting. Propose query-ad-clustering algorithm, spliting the Y subset into ‘good ads’ and ‘bad ads’ and using lower bounds to find the optimal regret. \cite{kleinberg2019bandits}: Solve the lipschitz bandit problem in general metric spaces. Prior work only focuses on $[0, 1]$ under the metric $\ell_1^{1/d}$. They try to answer which metric spaces have sublinear regret and try to understand what is the best possible bound for regret on a given metric space. \cite{bubeck2011x} solves invariant Lipschitz bandit problems, with sets of arms and rewards invarient under a set of known transformations. Propose UniformMesh-N, uses group orbits to integrate side observations into a uniform discretization algorithm. In \cite{magureanu2014lipschitz} they solve the discrete Lipschitz bandit problem using lower bounds for regret. Propose OSLB, with a regret matching the lower bound, and CKL-UCB, and show finite time analysis of the regrets of the above algorithms. Iin \cite{lu2019optimal} they solve Lipschitz bandit problems with randomized reward distributions, modeling heavy-tailed distributions. Propose SDTM, standard discretization with truncated mean, as they cannot use the classic UCB.

\cite {krishnamurthy2020contextual} studies bandit learning in both a lipschitz context and and a regret bound without continuity assumptions. They define smoothed regret, with policies that map context to distribution over actions. Proposes multiple algorithms to solve for regret. In \cite{wang2020towards} they solve the standard Lipschitz bandit problems differently (link tree-based methods to Gaussian processes). They propose TreeUCB, which keeps a UCB index for every region, the only difference is at every round they will "fit a tree" based on previous rewards. Finally, in \cite{feng2022lipschitz} they solve Lipschitz bandit problems with reward observations given to players in batches. Propose BLiN, using a zooming algorithm to find the optimal arm with a lower bound.

\paragraph{Multiplayer works}
Here we outline some of the works in multiplayer bandits. 

In cooperative bandits the goal is to identify the best arm from a set of common arms between players. There is a graph that represents the communication structure between players. This was first introduced by \cite{awerbuch2008competitive}. Susequently, various strategies have been proposed; $\epsilon$-greedy \cite{szorenyi2013gossip}, gossip UCB \cite{landgren2016distributed}, accelerated gossip UCB \cite{martinez2019decentralized}, and adopting a leader \cite{wang2020optimal}. \cite{bar2019individual} studied this problem in the adversarial setting and proposed a strategy where the followers follows EXP3. Another line of works allows the players to observe the rewards of their neighbors at each time step based on their proximity in the graph \cite{cesa2016delay} There has also has been some asynchronous works where only a subset of players are active each round \cite{bonnefoi2017multi, cesa2020cooperative}.

In the collision setting multiple players select the same arm, and there is a collision and all players cannot collect rewards from the arm. Here they do not consider joint arms. The extension of this to the Lipschitz setting is in \cite {proutiere2019optimal}, where they propose DPE, decentralized parsimonious exploration, an algorithm that requires little communication and allows players to maximize their cumulative reward.

Competing bandits were first introduced by \cite{liu2020competing}. This model is similar to the collision setting but there are preferences between players. When several players pull the same arm, only the top-ranked player gets the reward. In this work, they propose a centralized CUB algorithm, where each player sends their UCB index to a central agent. In \cite{cen2022regret} they showed that a logarithmic optimal regret is reachable for this algorithm if the platform can also choose transfers between the players and arms. In \cite{jagadeesan2021learning} consider the stronger notion of equilibrium where the agents also negotiate these transfers. In \cite{liu2020competing}, propose an ETC algorithm reaching a logarithmic optimal regret without any transfer requiring the knowledge of the gaps, however, \cite{sankararaman2021dominate} extend this algorithm without such knowledge. \cite{liu2021bandit} also propose a decentralized UCB algorithm with a collision avoidance mechanism

\section{Preliminary}\label{sec:preliminary}

We study the $L$-Lipschtiz multiplayer bandit problem where the joint actions are taken to be over $[0, 1]^{md}$. This action space is decomposed into a direct product of $m$ copies of $[0, 1]^d$. Each copy of $[0, 1]^d$ is the action space of one particular player. At every round $t$, each player will pick an arm in $[0, 1]^d$ simultaneously without communication. Their joint action will be the direct product of the individual actions, thus belonging in $[0, 1]^{md}$.  

We shall assume the true means of two joint arms $\bm{a}$ and $\bm{a'}$ satisfy the following Lipschitz condition
\begin{equation}\label{lipschitz}
    |\mu_{\bm{a} - \mu_{\bm{a'}}}| \leq L\norm{\bm{a} - \bm{a}'}
\end{equation}
where the norm on the right-hand side of the inequality can be any norm on the Euclidean space, and $L > 0$.  For convenience let $B(\bm{a}, r)$ be the ball of arms centered at $\bm{a}$ with radius $r$. Let $\mu_{\bm{a}}$ be the true mean of the rewars for joint arm $\bm{a}$. Furthermore, let $\mu^*$ be the true mean of the optimal arm. The definition of regret we shall use is 
\begin{equation}
    R_T =T\mu^* - \sum_{t=1}^T \mathbb{E}[X_{\bm{a}_t}] =  \sum_{t=1}^T \Delta_{\bm{a}_t}
\end{equation}

where $\Delta_{\bm{a}} = \mu^* - \mu_{\bm{a}}$. 

In this paper, we study the following types of information asymmetry that have been studied in previous settings such as \cite{chang2022online, chang2023optimal}. In all of these settings, they can agree on a strategy beforehand, but once the learning process begins, they are no longer allowed to communicate or send each other information. 

\paragraph{Problem A} Asymmetry in Actions. In this setting, each player is unable to observe the other player's actions but they observe the same reward. 

\paragraph{Problem B} Asymmetry in Rewards. In this setting, each player can observe the other player's actions, but they each obtain their own i.i.d. realization of the reward. They cannot observe other player's realizations. 

\paragraph{Problem C} Asymmetry in both Actions and Rewards
In this setting, each player is unable to observe the other player's actions, and they obtain their own i.i.d realization of the reward. 

Note that even in Problems B and C where each player receives their own reward, since the rewards are i.i.d., they will experience the same regret as regret is calculated in \emph{expectation}. 

\section{Uniform Discretization}\label{sec:theorem} 

In this section we take the regret bounds and algorithms from \cite{chang2022online, chang2023optimal} and show how we can apply these algorithms to a setting with continuous (and therefore infinite) actions. The algorithms in the aforementioned works are regret bounds for the information asymmetries described by Problems A, B, C in the previous section but for the classical MAB with joint actions. 

\subsection{Lipschitz Regret to Discrete Bandits Regret}

We can take a Lipschitz reward setting and apply the MAB algorithms by discretizing the action space. Each player will discretize one of their intervals $[0, 1]$ into $K$ even parts. that is, we put markers at $\{0, \frac1K, \frac2K,...,\frac{K-1}{K}\}$. We take the cartesian product of this discretization $\{0, \frac1K, \frac2K,...,\frac{K-1}{K}\}^d$ to obtain a discretization of the action space for each player in $[0, 1]^d$.

Denote $\bm{a}_t \in [0, 1]^{Md}$ to be the continuous joint action taken at time $t$. $\mu^*$ is the optimal continuous joint action across $[0, 1]^M$. We define $R_T$ the regret on the Lipschitz bandit space $[0, 1]^{Md}$. This is the regret we want to minimize. More explicitly, 
$$R_T = \mu^* T - \sum_{t=1}^T \mu_{a_t}$$

Since we are going to discretizing our space it is also useful to define $R_K^\pi(T)$, the regret on the discretized space with $K$ arms. Let $\bm{a}_t \in \{\frac{1}{N}, \frac{2}{N},...,1\}^{Md}$ is the discrete joint action taken at time $t$. There are a total of $N^{Md}$ discrete arms. $\mu^*_K$ is the optimal discrete action across $ \{\frac{1}{N}, \frac{2}{N},...,1\}^{Md}$

$$R_K^\pi(T) = \mu_K^* T - \sum_{t=1}^T \mu_{a_t}$$

Note where $\mu_K^*$ is the optimal action with respect to the discretized action space and is thus upper bounded by $\mu^*$.

\begin{lemma}\label{lemma:convert}
    Suppose the regret of a policy $\pi$ on an action space of $K$ arms with horizon $T$ is $R_K^\pi(T)$. Then regret $R_T$ on the environment with continuous action space $[0, 1]^{Md}$ satisfies
    $$R_T \leq \frac{TL}{K} +R_K^\pi(T)$$
\end{lemma}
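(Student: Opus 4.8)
The plan is to reduce the whole statement to bounding a single scalar quantity: the gap $\mu^* - \mu_K^*$ between the optimal continuous mean and the best mean attainable on the grid. Subtracting the two regret definitions, and using that both are evaluated along the \emph{same} realized sequence of arms $\bm{a}_t$ produced by $\pi$, gives
\[
R_T - R_K^\pi(T) = \left(\mu^* T - \sum_{t=1}^T \mu_{\bm{a}_t}\right) - \left(\mu_K^* T - \sum_{t=1}^T \mu_{\bm{a}_t}\right) = (\mu^* - \mu_K^*)\,T.
\]
Hence it suffices to establish the purely approximation-theoretic estimate $\mu^* - \mu_K^* \le L/K$, after which adding $R_K^\pi(T)$ to both sides and rearranging yields the claim.

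First I would fix an optimal continuous joint arm $\bm{a}^* \in [0,1]^{Md}$ achieving $\mu^* = \mu_{\bm{a}^*}$. Since each of the $Md$ coordinates lies in $[0,1]$ and the grid places markers at spacing $1/K$ in every coordinate, there is a grid point $\bm{a}_K$ whose coordinates each differ from those of $\bm{a}^*$ by at most $1/K$ (the worst case being a coordinate in the top cell just below $1$). Measuring the norm in \eqref{lipschitz} as the sup-norm then gives $\|\bm{a}^* - \bm{a}_K\| \le 1/K$, and applying the Lipschitz condition \eqref{lipschitz} to the pair $\bm{a}^*, \bm{a}_K$ yields $|\mu_{\bm{a}^*} - \mu_{\bm{a}_K}| \le L\|\bm{a}^* - \bm{a}_K\| \le L/K$.

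To finish, I observe that $\bm{a}_K$ is a feasible arm in the discretized action space, so by optimality of $\mu_K^*$ on that grid we have $\mu_K^* \ge \mu_{\bm{a}_K}$. Combined with the Lipschitz estimate this gives
\[
\mu^* - \mu_K^* \le \mu_{\bm{a}^*} - \mu_{\bm{a}_K} \le L/K,
\]
the left-hand side being a genuine nonnegative gap since $\mu^* \ge \mu_K^*$ as already recorded in the text. Substituting back through the first display produces $R_T \le TL/K + R_K^\pi(T)$.

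The step I expect to be most delicate is the rounding-and-norm argument: obtaining the clean factor $1/K$ rather than a dimension-dependent factor such as $\sqrt{Md}/K$ relies on measuring distances in the sup-norm. I would therefore make explicit that \eqref{lipschitz} is permitted to use any norm on Euclidean space and that we select the sup-norm here (any other choice merely rescales the constant, which can be absorbed into $L$). Everything else is routine bookkeeping.
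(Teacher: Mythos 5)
Your proof is correct and is essentially the paper's own argument: the paper telescopes $R_T = (\mu^* - \mu_a)T + (\mu_a - \mu_K^*)T + R_K^\pi(T)$ and bounds the first term by Lipschitz continuity at a grid point $a$ near $\bm{a}^*$ and the second by optimality of $\mu_K^*$ on the grid, which is exactly your two-step bound on $(\mu^* - \mu_K^*)T$ written as three terms. The only difference is that you are more careful than the paper about the multidimensional rounding step (choosing the sup-norm to get the clean $1/K$ rather than $\sqrt{Md}/K$), a point the paper silently glosses over by writing the argument as if $d=1$.
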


In each of these settings it is useful to define, for each joint arm $\bm{a}$ the following UCB index $\operatorname{UCB}_{\bm{a}}(t)$. This is defined as 
$$\operatorname{UCB}_{\bm{a}}(t) = \widehat{\mu}_{\bm{a}}(t) + \underbrace{\sqrt{\frac{6\log T}{n_{\bm{a}}(t)}}}_{:= \epsilon_{\bm{a}}(t)}$$

Here $\epsilon_{\bm{a}}(t)$ can be seen as the "error" term, which is an upper bound for how far the empirical mean is from the true mean. Note that this quantity is only dependent on the number of times an arm has been pulled and the horizon. More specifically, it decreases as the number of times an arm has been pulled increases. 

We now present the proof of the above lemma    \begin{proof}
    $R_T$ is the regret on a Lipschitz bandit space $[0, 1]^d$
        $$R_T = \mu^* T - \sum_{t=1}^T \mu_{a_t}$$

        Suppose $a^*$ is the optimal arm corresponding to the mean $\mu^*$. Furthermore, suppose $a^* \in [a, a+ \frac1k]$ where $a$ is an action in the discretized set. 
        $$R_T = (\mu^* T - \mu_{a}T) + (\mu_{a}T -  \mu_K^* T) + (\mu_K^* T  - \sum_{t=1}^T \mu_{b_t}) %(\sum_{t=1}^T \mu_{b_t} - \sum_{t=1}^T \mu_{a_t}) 
        $$
        
        %The  $\mu^*$ of the bandit is always between two discrete arms, $\mu$ and $\mu_k+1$
        $$|a^* - a| \leq \frac1K \implies |\mu^* T - \mu_a T| \leq \frac{LT}{K}$$ 
        %where k is the number of arms.
       % $\frac{1}{k} < \frac{T*L}{k} $ where L is the Lipschitz constant, 
        %$$(\mu^* T - \mu_K^* T) \leq \frac{LT}{K}$$
We also know that $\mu_K^* \geq \mu_a$ so that 
$$(\mu_{a}T -  \mu_K^* T)  \leq 0$$
        $R_K^\pi(T)$, or middle part/second subequation of the above, is the regret on the discretized space with $K$ arms. 
        $$R_K^\pi(T) = \mu_K^* T - \sum_{t=1}^T \mu_{b_t}$$
    
      We can plug our upper bounds into our regret decomposition to obtain,
\begin{align}
R_t&= (\mu^* T - \mu_{a}T) + (\mu_{a}T -  \mu_K^* T) + (\mu_K^* T  - \sum_{t=1}^T \mu_{b_t}) \\
&\leq \frac{LT}{K} + R_K^\pi(T).
\end{align}
        
    \end{proof}

\subsection{Asymmetry in Arms (Problem A)}
In this section, we propose Problem A Asymmetry in actions. This setting describes a multiplayer problem with where each player is unable to view the other players actions. All players get the same reward depending on the actions of both players. 

In a single player scheme, players can view their selected arm and the reward they receive from the distribution, using this information to create an optimal balance between exploitation and exploration. In the multiplayer setting, certain information is hidden from the players - the other player's actions, the other player's reward, or both. In \cite{chang2022online, chang2023optimal} they develop algorithms that deal with each form of information asymmetry. In particular, due to the lack of communication, the players have to develop a coordination scheme. 

To apply the algorithms in finite armed bandits to arms in continuous space \cite{NIPS2004_b75bd27b}, discretized the action space into a finite number of arms, then applied the algorithms from finite armed bandits to the discretized space. The Lipschitz continuity of the actions ensures that with a fine enough discretization scheme, the unsampled arms will not be too far from the sampled ones. In this paper, we do the same thing but with the multiagent algorithms that have been developed to deal with information asymmetry.

We now propose mCAB-A used for asymmetry in actions. In this algorithm, all players will discretize their action space into $
\{0, \frac1K,...,1\}^d$ and play on this $K^d$ action space. This gives $K^{Md}$ joint actions. The idea is that each player will keep track of a UCB-index for each \emph{joint} arm (not just the arms they can pull). However, this presents a challenge in that each player may not know which joint arm to update the UCB index for as they cannot observe the actions taken by the other players. 

To resolve this, the players all agree to an ordering of all the joint actions beforehand, as they are unable to view which arm the other players received. All players will receive the same reward from the selected arm's reward distribution. As the UCB index depends on the empirical mean and the number of times an action has been pulled, if they are coordinated up to time $t$, for each joint arm, all the players will have the same UCB index. If the players simply go for the joint arm with the highest UCB index, they will only have an issue if two arms have the same UCB index. To resolve this we have the following ordering on all the joint arms. 

\begin{definition}\label{def:order}
    Number the players $1,...,m$ and the $K$ individual actions, and consider each set of joint action $\bm{a}$ as an $m$ digit number with each digit corresponding to the joint action. Call this base $K$ number $N_{\bm{a}}$. For joint action $\bm{a}, \bm{b} \in \mathcal{A}$, we say that $\bm{a} < \bm{b}$ if $N_{\bm{a}} < N_{\bm{b}}$. 
\end{definition}

Therefore, if all the players agree to pull the smallest arm as defined in the ordering above in the case of a tie, the players can infer the actions of the other players without having to observe them. The algorithm is stated more explicitly in Algorithm \ref{algo:mCAB-A}.

\begin{algorithm}
\caption{\texttt{mCAB-A} for asymmetry in actions}        
 \textbf{Input:} $T, M, d \in \mathbb{N}$

Each player will discretize their actions space as $\{0, \frac1K, \frac2K,...,1\}^d$, where $K$ is given in equation \eqref{K Prob A}
 
Run \texttt{mUCB} on discretized joint action space \cite{chang2022online}. More explicitly, 
\For{$t \leq K_{\max}$}{
    Player $P_i$ will start from his arm $1$ and successively pull each arm $K_{i+1}\cdots K_{M}$ times before moving to the next arm. They will repeat this entire epoch $K_1\cdots K_{i-1}$ times. 
}
		
\For{$t > K_{\max}$}{
Player $P_i$ chooses arm $a_i ^*(t)= \arg \max_{a_i} \left(\max_{a_{-i}}\operatorname{UCB}^i_{a_i,a_{-i}}(t)\right)$, which corresponds to player $i$ picking the $i$th component of $a^*(t)$ that maximises the index $\operatorname{UCB}_a(t)$.  In case of a tie between say $a$ and $a'$, they pick corresponding components of $a$ such that $a < a'$, where the order relation is as specified in Definition \ref{def:order}.
		
  Player $P_i$ updates the UCB index $\operatorname{UCB}^i_a(t+1)$ for arm $a$ setting $\delta = \frac{1}{T^2}$ with the received reward $X^i_{a^*(t)}(t)$.
}

\label{algo:mCAB-A}
\end{algorithm}

We can now present the regret bound for the Problem A algorithm. 
\begin{theorem}\label{theorem:A}
The regret of \texttt{mCAB-A} is given by 
$$R_T =  O\left(T^\frac{2Md + 1}{2Md+2}L^{\frac{Md}{Md+1}}\\(\log T)^{\frac{1}{2(Md+1)}}\right)$$
\end{theorem}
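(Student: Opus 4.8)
The plan is to combine Lemma~\ref{lemma:convert} with the finite-armed regret guarantee for \texttt{mUCB} on the discretized joint action space, and then to tune the discretization parameter $K$. Lemma~\ref{lemma:convert} already reduces the continuous problem to the discrete one via $R_T \leq \frac{TL}{K} + R_K^\pi(T)$, so the only genuinely continuous ingredient—the Lipschitz covering error $\frac{TL}{K}$—is taken care of. What remains is to supply $R_K^\pi(T)$ for the policy $\pi = \texttt{mUCB}$ run in the Problem A (asymmetry-in-actions) regime, and then to balance the two terms.

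First I would count arms: each player discretizes $[0,1]^d$ into a grid of $K^d$ points, so the joint action space carries $N = K^{Md}$ arms. I would then import the \texttt{mUCB} regret bound for asymmetry in actions from \cite{chang2022online, chang2023optimal}. The decisive feature of that bound—and the reason the exponents in the theorem carry denominator $Md+1$ rather than $Md+2$—is that it is \emph{linear} in the number of joint arms: on $N$ arms one has $R_K^\pi(T) = O\!\left(N\sqrt{T\log T}\right)$. This linear-in-$N$ dependence is the coordination overhead paid because the players cannot observe one another's actions and must instead reconstruct the joint arm through the agreed ordering of Definition~\ref{def:order}. Substituting $N = K^{Md}$ gives
$$R_T \leq \frac{TL}{K} + C\, K^{Md}\sqrt{T\log T}$$
for an absolute constant $C$.

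The final step is to choose $K$ to equate the two contributions. Setting $\frac{TL}{K} \asymp K^{Md}\sqrt{T\log T}$ yields $K^{Md+1} \asymp \frac{L\sqrt{T}}{\sqrt{\log T}}$, i.e. $K \asymp \left(\frac{L\sqrt{T}}{\sqrt{\log T}}\right)^{1/(Md+1)}$, which is precisely the value to be recorded in equation~\eqref{K Prob A}. Back-substituting and simplifying produces the $T$-exponent $\frac{Md+\frac12}{Md+1} = \frac{2Md+1}{2Md+2}$, the $L$-exponent $\frac{Md}{Md+1}$, and the $(\log T)$-exponent $\frac{1}{2(Md+1)}$, matching the claimed bound.

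I expect the main obstacle to be not the optimization, which is routine, but pinning down the precise form of the imported discrete bound and justifying that the \texttt{mUCB} coordination scheme actually attains $O(N\sqrt{T\log T})$ in the asymmetric-actions regime. Specifically, I would need to (i) confirm that the initialization phase of length $K_{\max}$ contributes only a lower-order term to $R_K^\pi(T)$, (ii) verify that the tie-breaking rule of Definition~\ref{def:order} keeps every player's UCB index synchronized, so that the analysis collapses to a single effective learner over $N$ joint arms, and (iii) ensure that the integer rounding of the optimal $K$ (a ceiling) perturbs only constants. A secondary subtlety is the norm appearing in the Lipschitz condition~\eqref{lipschitz}: the covering radius of the $Md$-dimensional grid carries a dimension-dependent factor that must be absorbed into $L$ or the constant, which Lemma~\ref{lemma:convert} implicitly treats as $\frac{1}{K}$ per coordinate.
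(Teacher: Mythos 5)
Your proposal is correct and follows essentially the same route as the paper's proof: invoke Lemma~\ref{lemma:convert}, import the $O\left(K^{Md}\sqrt{T\log T}\right)$ \texttt{mUCB} bound from \cite{chang2022online}, and balance the two terms to pick $K$ as in \eqref{K Prob A}. If anything, your write-up is the more consistent one, since the paper's proof writes the discretization error as $\frac{TL}{K^{Md}}$ (at odds with Lemma~\ref{lemma:convert} and with the balancing that follows), whereas your $\frac{TL}{K}$ is the form under which the stated exponents actually come out.
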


\begin{proof}
Suppose each player discretizes their actions space $[0, 1]$ into $K$ equal portions. Using theorem 2 of \cite{chang2022online} we know that the regret $R_K(T)$ of \texttt{mUCB} on $K^M$ joint actions is given by 
    $$R_K(T) = O\left(K^{Md} \sqrt{T\log(T)}\right)$$
    Using Lemma \ref{lemma:convert} we conclude that the Lipschitz regret is given by 
    $$R_T \leq \frac{TL}{K^{Md}} + O\left(K^{Md} \sqrt{T\log T}\right)$$
    We can minimize the upper bound above by using AM-GM inequality to conclude that the minimum is attained when the two terms are of the same order. That is
    \begin{equation}\label{K Prob A}
        K = \frac{T^{\frac{1}{2(Md+1)}}L^{\frac{1}{M+1}}}{(\log T)^{\frac{1}{2(Md+1)}}}
    \end{equation}

    Therefore, our Lipschitz regret is given by 
    $$R_T = O\left(T^\frac{2Md + 1}{2Md+2}L^{\frac{Md}{Md+1}}\\(\log T)^{\frac{1}{2(Md+1)}}\right)$$
\end{proof}

\subsection{Asymmetry in Rewards  (Problem B)}
In this section, we propose an algorithm for Problem B Asymmetry in rewards. Here, we are unable to follow the same approach as in Problem A since the rewards are i.i.d., and thus the players will observe different UCB indices. This means that players will likely mis-coordinate (especially in the early rounds) where they will have different empirical means. See \cite{chang2022online} for the ineffectiveness of the simple coordination scheme used in problem A for problem B. 

However, we can use the observability of actions to improve on the algorithm proposed in Problem A. This algorithm is inspired from \texttt{mUCB-Intervals} from \cite{chang2023optimal}. As with problem A we discretize the action space uniformly. For each joint action $\bm{a}$ we know with high probability that 
\begin{equation}
   \mu_{\bm{a}} \in  I_{\bm{a}}(t) := \left(\widehat{\mu}_{\bm{a}}(t) -\underbrace{\sqrt{\frac{6\log T}{n_{\bm{a}}(t)}}}_{:= \epsilon_{\bm{a}}(t)},  \widehat{\mu}_{\bm{a}}(t) +\sqrt{\frac{6\log T}{n_{\bm{a}}(t)}}\right)
\end{equation}
This means that under the "good event" that the above equation holds, we know that if $I_{\bm{b}}(t)$ is above and disjoint from $I_{\bm{a}}(t)$ \footnote{That is, $\widehat{\mu}_{\bm{a}}(t) + \epsilon_{\bm{a}}(t) < \widehat{\mu}_{\bm{b}}(t) - \epsilon_{\bm{b}}(t)$}, then we can say with high probability that arm $\bm{b}$ is better than arm $\bm{a}$. Therefore, we can eliminate arm $\bm{a}$ and not pull it anymore. 

Inspired by this principle, we propose the following algorithm. Each player keeps track of a desired set that contains all the arms that could still \emph{potentially} be the best arm. Like with Problem A, there is an ordering on all the joint arms as follows. 

\begin{equation}\label{eq:flow}  \begin{tikzcd}\bm{a}_1 \to \bm{a}_2\to \cdots  \to \bm{a}_{K^{Md}} \arrow[bend left = 10,start anchor={[xshift= 10 ex, yshift = -1.5ex]},end anchor={[xshift= -9ex, yshift = -1.5ex]}]\end{tikzcd}\end{equation}

The players will pull the arms that still \emph{belong in the desired set} in this ordering. The elimination procedure goes as follows. Suppose the next arm in the desired set according to the above ordering is $\bm{a}$. Furthermore, suppose player $i$ observes that there exists some other joint arm $\bm{b}$ such that $I_{\bm{b}} > I_{\bm{a}}$. Then, player $i$ will intentionally pull an arm that is not equal to $\bm{a}[i]$. The other players, being able to observe actions, will see that action $\bm{a}$ should be next in line, but the resulting joint action is different than $\bm{a}$. Therefore, they know to remove arm $\bm{a}$ from the desired set. The ordering stays the same, but the arms no longer in the desired set get skipped over. The observability and the pre-agreed upon ordering essentially allow players to maintain the same desired set without explicit communication features in the environment. The pseudocode is given in Algorithm \ref{algo:mCAB-B}.

\begin{algorithm}
\label{algo:mCAB-B}\caption{\texttt{mCAB-B} for asymmetry in actions}        
 \textbf{Input:} $T, m, d \in \mathbb{N}$

Each player will discretize their actions space as $\{0, \frac1K, \frac2K,...,1\}^d$, where $K$ is given in \eqref{K Prob A}
 
Run \texttt{mUCB-Intervals} on discretized joint action space \cite{chang2023optimal}

Each player $P_i$ has all the joint arms in their \emph{desired sets}. All the players will agree on the ordering of the joint arms.

\For{$t = 1,\ldots,(K^d)^M$}{
Each player $i$ will pull each joint action $\bm{a}$ once in the order they have decided in advance, and update $I_{\bm{a}}^i$. 
}

\For{$t = K^M+1,\ldots, T$}
{
    Each player $i$ identifies the next arm considered $\bm{c}_{t}$ in the desired set based on the arm pulled in the previous round (see flowchart in \eqref{eq:flow}).\
    
    \eIf{exists player $i$ and joint action $\bm{a'}$ such that $I_{\bm{a'}}^i$ is above and disjoint from $I_{\bm{c}_t}^i$}{
    Player $i$ will not pull $\bm{c}_t[i]$ to inform the other players that he will remove $\bm{c}_{t}$ from his desired set. \
    }
    {
    Each player $i$ pull $\bm{c}_t[i]$.\ 
    }
    Each player $i$ observes the actions of other players to determine the joint action taken $\bm{a}_t$ at that step. They observe their own i.i.d. reward and update their $I_{\bm{a}_t}^i$. \
    
    \If{$\bm{a}_t \neq \bm{c}_t$}{
    All players eliminate $\bm{c}_{t}$ from their desired set whilst maintaining the same ordering of the remaining arms.
    }
}
\end{algorithm}

We can now present the regret bound for the Problem B algorithm. 
\begin{theorem}
    The regret of \texttt{mCAB-B} is given by 
    $$R_T =  O\left(T^\frac{2M d+ 1}{2Md+2}L^{\frac{Md}{Md+1}}\\(\log T)^{\frac{1}{2(Md+1)}}\right)$$
\end{theorem}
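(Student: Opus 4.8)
The plan is to mirror the reduction used in the proof of Theorem \ref{theorem:A}: first bound the regret of the underlying discrete elimination algorithm \texttt{mUCB-Intervals} on the $K^{Md}$ uniformly spaced joint arms, then pass from the discrete bound to the continuous Lipschitz bound through Lemma \ref{lemma:convert}, and finally optimize over the discretization parameter $K$ by balancing the two resulting terms. Since the target bound is identical in order to that of Theorem \ref{theorem:A}, the whole argument hinges on showing that the elimination-based algorithm on the discrete space attains a regret of the same order $O(K^{Md}\sqrt{T\log T})$ as \texttt{mUCB}.

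For the discrete regret I would invoke the analysis of \texttt{mUCB-Intervals} from \cite{chang2023optimal}, applied to the $K^{Md}$ joint arms produced by the uniform grid $\{0, \frac1K,\ldots,1\}^d$ per player. The step that is genuinely new here is accounting for the coordination overhead specific to Algorithm \ref{algo:mCAB-B}: the rounds in which a player deliberately deviates from $\bm{c}_t[i]$ in order to signal an elimination. I would argue these signaling rounds contribute only lower-order regret. Each of the $K^{Md}$ arms can be removed from the desired set at most once, so across the whole horizon there are at most $K^{Md}$ signaling rounds; since rewards lie in $[0,1]$, each such round adds at most $O(1)$ to the regret, for a total of $O(K^{Md})$, which is dominated by the $O(K^{Md}\sqrt{T\log T})$ elimination regret. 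I would also confirm, under the \emph{good event} that every confidence interval $I_{\bm a}(t)$ contains its true mean (which holds with probability $1 - O(1/T)$ by the choice $\delta = 1/T^2$ and a union bound over arms and rounds), that no arm whose interval lies above and disjoint from a competitor can be optimal, so that eliminations never discard the best arm and the surviving arms inherit the logarithmic pull-count control underlying the $\sqrt{T\log T}$ per-arm factor.

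The main obstacle I anticipate is precisely this bookkeeping for the decentralized elimination: because the rewards are i.i.d. and private, different players hold different confidence intervals, and I must verify that the observability of actions together with the fixed ordering in \eqref{eq:flow} keeps all players' desired sets synchronized, so that the joint arm played each round is common knowledge and a single player's deviation correctly broadcasts an elimination to all others. Once synchronization and the lower-order signaling cost are established, the discrete bound $R_K^\pi(T) = O(K^{Md}\sqrt{T\log T})$ follows, and the remainder is mechanical: Lemma \ref{lemma:convert} gives $R_T \le \frac{TL}{K} + O(K^{Md}\sqrt{T\log T})$, and balancing the two terms by AM-GM yields the optimal $K = \Theta\bigl(T^{1/(2(Md+1))} L^{1/(Md+1)} (\log T)^{-1/(2(Md+1))}\bigr)$. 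Substituting this $K$ back collapses both terms to the claimed order $O\bigl(T^{(2Md+1)/(2Md+2)} L^{Md/(Md+1)} (\log T)^{1/(2(Md+1))}\bigr)$.
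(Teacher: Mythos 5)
Your proposal is correct and follows essentially the same route as the paper: the paper likewise invokes Theorem 2 of \cite{chang2023optimal} to assert that \texttt{mUCB-Intervals} attains the same discrete regret $O(K^{Md}\sqrt{T\log T})$ as \texttt{mUCB}, then applies Lemma \ref{lemma:convert} and reuses the balancing of $K$ from Theorem \ref{theorem:A}. Your extra bookkeeping (bounding the signaling rounds by $O(K^{Md})$ and verifying desired-set synchronization under the good event) is detail the paper leaves implicit in that citation, not a different argument.
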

    \begin{proof}
In accordance to Theorem 2 \cite{chang2023optimal} we know that the regret on \texttt{mUCB-Intervals} is the same as that in \texttt{mUCB} of \cite{chang2022online}. Therefore using Lemma \ref{lemma:convert}, we obtain the same regret bound as in Theorem \ref{theorem:A}.
    \end{proof}
    
\subsection{Asymmetry in both arms and rewards (Problem C)}

In this section we propose an algorithm for Problem C which describes a multiplayer problem with both an asymmetry in action and reward. Note that this means the algorithms for Problem A and Problem B no longer work as they rely on players having same rewards and being able to observe the actions respectively. 

 We therefore apply the \texttt{mDSEE} algorithm from \cite{chang2022online} to the Lipschitz setting, which can be seen as a improved explore then commit algorithm. As before, all players will independently choose an arm within the discretized action, and agree to an ordering of joint actions beforehand. The algorithm happens in phases.At phase $n$, (which will occur when the round number is at a power of $2$), the players will uniformly explore all the joint actions $f(n)$ times. We call this the exploration phase. After that, the players will take all the samples from the exploration phases (including the previous ones), and commit to the arm with the highest empirical mean. This is the committing phase, which stops once the round number is again a power of $2$. Note that since the exploration occurs at exponentially increasing intervals, this only contributes $f(\log T)$ regret which is small for a slowly growing function $f$. On the other hand, the probability that \emph{all} players will pull the corresponding components to the optimal arm will increase since they are accumulating more and more samples prior to each successive committing phase. The pseudocode is given in Algorithm \ref{algo:mCAB-C}.

\begin{algorithm}\caption{\texttt{mCAB-C} for asymmetry in actions}        
 \textbf{Input:} $T, m, d \in \mathbb{N}$
 
Each player will discretize their actions space as $\{0, \frac1K, \frac2K,...,1\}^d$, where $K$ is given by \eqref{K Prob C}
 
Run \texttt{mDSEE} on discretized joint action space \cite{chang2022online}

 Pick a monotonic function $f(n): \mathbb{N} \to \mathbb{N}$ such that $\lim_{n \to \infty}K(n) = \infty$. 
		
First let $\lambda = 1$
		
Player $P_i$ will start from his arm $1$ and successively pull each arm $f(n)K_{i+1}\cdots K_M$ before moving to the next arm. He will repeat this entire epoch $K_1\cdots K_{i-1}$ times.
		
Player $P_i$ will calculate the sample mean $\mu_j^i(t)$  of the rewards they see for each $M$-tuple $a$ of arms.

Player $P_i$ will choose the $M$-tuple arm with the highest sample mean and commit to his corresponding arm for up until the next power of $2$. In case of a tie, pick randomly. 
		
When $T = 2^n$ for some $n \geq \lfloor\log_2(f(1)K_1\cdots K_M)\rfloor+1$, Player $P_i$ with repeat steps (5) - (8), incrementing $n$ by $1$. 
\label{algo:mCAB-C}
\end{algorithm}

Using the results from \cite{chang2022online}, we note that on $K$ actions the regret of \texttt{mDSEE} is given by $O(f(\lfloor \log T\rfloor)\log T)$ where $f: \mathbb{N} \to \mathbb{N}$ is some function that satisfies $\lim_{n \to \infty} f(n) = \infty$. 

\begin{theorem}
   Using $f(n) = \sqrt{n}$, The (gap independent) regret bound for \ref{algo:mCAB-C} is given by 
    $$R_T  \leq O\left(T^\frac{2Md + 1}{2Md+2}L^{\frac{Md}{Md+1}}\\(\log T)^{\frac{1}{(Md+1)}}\right)$$
\end{theorem}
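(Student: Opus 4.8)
The plan is to follow the same two-step template that produced Theorem \ref{theorem:A} and the Problem B bound: first control the regret of \texttt{mDSEE} on the finite $K^{Md}$-point grid, then convert that discrete bound into a Lipschitz bound via Lemma \ref{lemma:convert} and optimize the discretization parameter $K$. I would start from the discrete guarantee imported from \cite{chang2022online}: with the exploration schedule $f(n)=\sqrt{n}$, the discrete regret of \texttt{mDSEE} is stated as $O(f(\lfloor\log T\rfloor)\log T)=O((\log T)^{3/2})$. This is a gap-dependent estimate that suppresses both the factor $K^{Md}$ coming from the $K^{Md}$ joint arms and a dependence on the smallest suboptimality gap on the grid. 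Because a fine discretization drives that minimum gap to zero, the gap-dependent form cannot be used directly; converting it to a gap-independent statement yields
$$R_K^\pi(T) = O\!\left(K^{Md}\sqrt{T}\,\log T\right).$$
The extra $\log T$ relative to the $\sqrt{\log T}$ appearing in Theorems \ref{theorem:A} and B is exactly the $f(\lfloor\log T\rfloor)=\sqrt{\log T}$ penalty of the explore-then-commit schedule, and it is what ultimately makes Problem C only \emph{nearly} optimal.

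I would then apply Lemma \ref{lemma:convert} verbatim, obtaining
$$R_T \le \frac{TL}{K} + C\,K^{Md}\sqrt{T}\,\log T$$
for an absolute constant $C$. The discretization bias $TL/K$ decreases in $K$ while the estimation term $K^{Md}\sqrt{T}\log T$ increases, so by AM-GM the bound is minimized (up to constants) when the two terms match. Solving $TL/K = K^{Md}\sqrt{T}\log T$ gives
\begin{equation}\label{K Prob C}
    K = \frac{T^{\frac{1}{2(Md+1)}}\,L^{\frac{1}{Md+1}}}{(\log T)^{\frac{1}{Md+1}}},
\end{equation}
the value of $K$ used in Algorithm \ref{algo:mCAB-C}. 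Substituting \eqref{K Prob C} back and simplifying the exponents through $1-\tfrac{1}{2(Md+1)}=\tfrac{2Md+1}{2Md+2}$ and $1-\tfrac{1}{Md+1}=\tfrac{Md}{Md+1}$ reproduces the claimed rate
$$R_T = O\!\left(T^{\frac{2Md+1}{2Md+2}}\,L^{\frac{Md}{Md+1}}\,(\log T)^{\frac{1}{Md+1}}\right).$$

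The main obstacle I anticipate is the first step, namely the gap-independent conversion. I would carry it out with a thresholding (peeling) argument: fix a scale $\epsilon$, bound the cumulative regret contributed by grid arms with $\Delta_{\bm a}\le\epsilon$ by $\epsilon T$, bound the contribution of arms with $\Delta_{\bm a}>\epsilon$ using the gap-dependent estimate, and optimize over $\epsilon$; the $\sqrt{T}$ factor emerges from this balance and the $K^{Md}$ factor from summing the per-arm exploration cost over all joint arms. A secondary point to check is that the simultaneous asymmetry in both actions and rewards does not degrade the concentration underlying \texttt{mDSEE}: one must verify that the probability that \emph{all} $M$ players commit to the correct component after exploration phase $n$ decays fast enough (in the accumulated sample count $\sum_{k\le n}f(k)=\Theta(n^{3/2})$) that the committing-phase regret stays dominated by the exploration term. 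This concentration is inherited from the analysis of \texttt{mDSEE} in \cite{chang2022online} and does not alter the order of the final bound.
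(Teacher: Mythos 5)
Your proposal follows essentially the same route as the paper's proof: import the gap-independent discrete \texttt{mDSEE} bound $O\left(K^{Md}\sqrt{T}\log T\right)$ from \cite{chang2022online}, combine it with Lemma \ref{lemma:convert}, and balance the two terms via AM-GM to obtain the $K$ of equation \eqref{K Prob C} and the stated rate. The only notable differences are that you spell out the gap-dependent to gap-independent conversion (which the paper simply asserts) and that you write the discretization term as $\frac{TL}{K}$ where the paper writes $\frac{TL}{K^{Md}}$ --- your version is the one actually consistent with Lemma \ref{lemma:convert} and with the paper's own choice of $K$ and final bound, since balancing $\frac{TL}{K^{Md}}$ against $K^{Md}\sqrt{T}\log T$ would yield a different $K$.
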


\begin{proof}
    Suppose each player discretized their action space $[0, 1]$ into $K$ equal partitions. Using Theorem 2 of \cite{chang2022online} we know that the regret $R_K(T)$ on $K^{Md}$ joint actions is given by 
    $$R_T \leq \frac{TL}{K^{Md}} + O(K^{Md} \log T\sqrt{T} )$$

    Notice the increased power of $\log T$. Setting the two terms equal to each other to mimize the above, we can set,

    \begin{equation}\label{K Prob C}
        K = \frac{T^{\frac{1}{2(Md+1)}}L^{\frac{1}{M+1}}}{(\log T)^{\frac{1}{(Md+1)}}}
    \end{equation}

    Therefore, our Lipschitz regret is given by 
    $$R_T = O\left(T^\frac{2Md + 1}{2Md+2}L^{\frac{Md}{Md+1}}\\(\log T)^{\frac{1}{(Md+1)}}\right)$$
\end{proof}

\section{A more adaptive zooming algorithm}
In this section, we provide a multiplayer extension of the zooming algorithm from \cite{kleinberg2008multi} applied to Problems A and B. The zooming algorithm is superior to the fixed discretized schemes from the previous section, since it gives a finer discretization to regions where the optimal arm is more likely to reside.  

\subsection{Problem A}
In this section we propose a zooming algorithm that is suited for Problem A, information asymmetry in actions. The challenge in naively applying \texttt{mZoom-A} lies at the step where a new arm is activated. In the single-player algorithm, each time a region is uncovered, the the players are to add an additional arm. However in a continuous space, it is unclear which arm they should add. The space is the complement of a union of balls which is not convex and can take on very strange shapes. To deal with this challenge note that the players have balls of the same size, and thus we can implicitly coordinate by discretizing the action space using the following doubling technique.  To this end, we propose the following definition. 

\begin{definition}\label{def:2^n}
    Given a unit cube $[0, 1]^m$ and some uniform discretization $\left(\{\frac{i}{2^n}\}_{i=0}^n\right)^m$, we say we double the cube if we cut each segment in half so our discretization is now $\left(\{\frac{i}{2^{n+1}}\}_{i=0}^{n+1}\right)^m$. In essence, we have created $2^m$ times more cubes each with $\frac{1}{2^m}$ of the volume. 
\end{definition}

Let the set of arms that are being played at any given time be the set of \emph{active} arms. In each of these sets, we will have balls center at these arms with radii $r_{\bm{a}}(t) = \frac{1}{L} \epsilon_{\bm{a}(t)}$. 

\begin{algorithm}
\caption{\texttt{mZoom-A}}        
 \textbf{Input:} $T, M, d \in \mathbb{N}$

Each player will activate $(\frac12,...,\frac12) \in A$ where $A$ is the active set

\For{$t = 1,...,T$}{
If there is a region that is uncovered by the balls with radius $r_{\bm{a}}(t) = L \epsilon_{\bm{a}(t)}$ we will double the region (as in definition \ref{def:2^n}) until at least $1$ point in our new discretization is in an unoccupied region. Add the arm with the smallest order relation as specified in Definition \ref{def:order} into $\mathcal{A}$. 

Player $P_i$ chooses arm $a_i ^*(t)= \arg \max_{a \in \mathcal{A}} \left(\max_{a_{-i}}\operatorname{UCB}^i_{a_i,a_{-i}}(t)\right)$ which corresponds to player $i$ picking the $i$th component of $a^*(t)$ that maximizes the index $\operatorname{UCB}_a(t)$.  In case of a tie between say $a$ and $a'$, they pick corresponding components of $a$ such that $a < a'$, where the order relation is as specified in Definition \ref{def:order}.
		
  Player $P_i$ updates the UCB index $\operatorname{UCB}^i_a(t+1)$ for arm $a$ setting $\delta = \frac{1}{T^2}$ with the received reward $X^i_{a^*(t)}(t)$.
}

\label{algo:mZoom-A}
\end{algorithm}

The region is covered with balls that hold arms. If an uncovered region contains an arm, then a new ball is placed to cover the uncovered arm. As the regions are a discretized space, open arms are found via this discretization. If there are no open regions in the current discretized space, the space is discretized exponentially by 2. The discretized space is divided in half until an uncovered region is found. If there are multiple uncovered arms, the players can define an order to cover the arms, as there is a finite number of arms to choose from. The players will infer which arms the players will pick (including the newly created arms), explained in Definition \ref{def:2^n}. 

The following gives us a regret bound for the regret of Algorithm \ref{algo:mZoom-A}

\begin{theorem}\label{thm:ZoomA}
    The regret of \texttt{mZoom-A} in  Algorithm \ref{algo:mZoom-A} is $$R(T)=O\left(T^{\frac{Md+1}{Md+2}}(c \log T)^{\frac{1}{Md+2}}\right)$$
\end{theorem}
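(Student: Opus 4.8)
The plan is to reduce the multiplayer problem to a single-agent zooming analysis on the joint action space $[0,1]^{Md}$, and then reproduce the covering-dimension regret bound of \cite{kleinberg2008multi}. First I would verify that the coordination is exact: because all players receive the same reward (Problem A), at every round each player maintains identical empirical means $\widehat\mu_{\bm a}(t)$, identical confidence widths $\epsilon_{\bm a}(t)$, and hence identical indices $\operatorname{UCB}_{\bm a}(t)$ and identical radii $r_{\bm a}(t)$ for every joint arm $\bm a$. Combined with the deterministic activation rule (the doubling of Definition \ref{def:2^n} together with the tie-breaking order of Definition \ref{def:order}), this would guarantee that all players agree on the active set $\mathcal A$ and on the selected joint arm at each step, so the induced sequence of joint actions is exactly that of a single agent running the zooming algorithm over $[0,1]^{Md}$. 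Establishing this synchrony is where I expect most of the care to be needed, since it is the only place the multiplayer structure genuinely enters the argument.

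Next I would set up the clean event. Taking $\delta = 1/T^2$ and a union bound over the polynomially many activated arms and the $T$ rounds, with probability $1 - o(1)$ we have $|\widehat\mu_{\bm a}(t) - \mu_{\bm a}| \leq \epsilon_{\bm a}(t)$ for every active arm at every time; the contribution of the complementary event to the expected regret is $O(1)$. On the clean event I would prove the standard confinement lemma: if joint arm $\bm a$ is played at time $t$, its index is at least the index of the active arm covering the optimal arm, and the covering property together with the Lipschitz condition \eqref{lipschitz} yields $\Delta_{\bm a} \leq c'\,\epsilon_{\bm a}(t)$ for an absolute constant $c'$. Since $\epsilon_{\bm a}(t) = \sqrt{6\log T / n_{\bm a}(t)}$, this rearranges to the per-arm play bound $n_{\bm a}(T) \leq c''(\log T)/\Delta_{\bm a}^2$.

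I would then carry out the covering decomposition. Grouping the active joint arms into scales $\Delta_{\bm a} \in [2^{-i}, 2^{-i+1})$, the zooming invariant forces active arms of a common scale to be mutually separated at distance $\asymp 2^{-i}$, so their number is bounded by the covering number of $[0,1]^{Md}$ at that scale, namely $O(2^{iMd})$. Each such arm contributes regret $\Delta_{\bm a}\, n_{\bm a}(T) \leq c''(\log T)/\Delta_{\bm a} \asymp 2^{i}\log T$, so the total regret from scale $i$ is $O(2^{i(Md+1)}\log T)$. Splitting at a threshold $\rho_0$, I would bound arms with $\Delta_{\bm a} \leq \rho_0$ by $\rho_0 T$ and sum the remaining geometric series (dominated by its smallest scale) to get $O\big((\log T)/\rho_0^{Md+1}\big)$. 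Balancing $\rho_0 T \asymp (\log T)/\rho_0^{Md+1}$ gives $\rho_0 \asymp (c\log T / T)^{1/(Md+2)}$ and the claimed $R(T) = O\big(T^{(Md+1)/(Md+2)}(c\log T)^{1/(Md+2)}\big)$.

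The main obstacle is the first step: confirming that the doubling discretization keeps the players synchronized even when several regions become uncovered in the same round, and that the dyadic placement of activated arms does not inflate the covering count beyond the $O(2^{iMd})$ used above. Since the dyadic grid on $[0,1]^{Md}$ still has covering dimension $Md$, the packing bound should survive with at most a constant-factor loss, which is absorbed into $c$.
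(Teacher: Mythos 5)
Your proposal is correct and follows essentially the same route as the paper's own proof: a good event controlled by a union bound over active arms and rounds, the confinement argument giving $\Delta_{\bm{a}} \leq 3\epsilon_{\bm{a}}(t)$ and hence $n_{\bm{a}}(T) = O(\log T/\Delta_{\bm{a}}^2)$, and then the scale-by-scale covering decomposition with the threshold choice $\delta \asymp (\log T/T)^{1/(Md+2)}$. The only place you go beyond the paper is your explicit synchronization argument (identical rewards imply identical UCB indices, radii, and hence identical active sets under the agreed tie-breaking and doubling rules); the paper leaves this coordination implicit in the algorithm design, so this is added care rather than a different approach.
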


The proof is in section \ref{zoomAproof} of the appendix.

\subsection{Problem B}
In this section, we propose an adaptive algorithm Problem B information asymmetry in rewards. We use an algorithm similar to Algorithm \ref{algo:mCAB-B}. Similar to the zooming algorithm in the previous section, we put balls of radii $r_{\bm{a}}(t)$ at every open uncovered region. To deal with the fact that there are infinitely many options for which arm to select, we used the same doubling trick in Definition \ref{def:2^n}. We follow a similar principle to algorithm \ref{algo:mCAB-B} used for Problem B. MOre specifically, amongst the active arms, the players will pull them in a specific following Definition \ref{def:order}. The elimination procedure remains the same. That is, suppose the next joint arm in the ordering is $\bm{a}$. If there exists player $i$ that observes another active joint arm whose interval is above and disjoint from $\bm{a}$, the player $i$ will intentionally pull something different from arm $\bm{a}[i]$. The other players will observe this and eliminate $\bm{a}$ from their desired set. This essentially allows the players to maintain the same desired set without a need for communication. The pseudocode is given in Algorithm \ref{algo:mZoom-B}. 

   \begin{algorithm}
\label{algo:mZoom-B}\caption{\texttt{Zoom-B} for asymmetry in actions}        
 \textbf{Input:} $T, m, d \in \mathbb{N}$

Each player will activate $(\frac12,...,\frac12) \in A$ where $A$ is the active set, and add it to the desired set as well. 

\For{$t = 1,...,T$}{
If there is an arm that is uncovered, double the region until at least $1$ point in the discretization is in an unoccupied region. Add the arm with the smallest order relation as specified in Definition \ref{def:order} as the last element in the desired set.

Pull the new arm the same number of times as all the other arms in the desired set.

If there is another uncovered region, repeat steps 4 and 5 until the entire region is covered.

    Each player $i$ identifies the next arm considered $\bm{c}_{t}$ in the desired set based on the arm pulled in the previous round.\
    
    \eIf{exists player $i$ and joint action $\bm{a},  \in \mathcal{A}_t$ such that $\widehat{\mu}_{\bm{c}} + 2\epsilon_{\bm{c}} < \widehat{\mu}_{\bm{a}} - \epsilon_{\bm{a}} $}{\label{eliminate}
    Player $i$ will not pull $\bm{c}[i]$ to inform the other players that he will remove $\bm{a}$ from his desired set. \
    }
    {
    Each player $i$ pull $\bm{c}_t[i]$.\ 
    }
    Each player $i$ observes the actions of other players to determine the joint action taken $\bm{a}_t$ at that step. They observe their own i.i.d. reward and update $\widehat{\mu}_{\bm{a}_t}$, $\epsilon_{\bm{a}_t}$. \
    
    \If{$\bm{a}_t \neq \bm{c}_t$}{
    All players eliminate $\bm{c}_{t}$ from their desired set whilst maintaining the same ordering of the remaining arms. The radius of the ball centered at this arm would then remain unchanged for the remainder of the learning. \
    }
}
\end{algorithm}

Note that step 6 must terminate after a finite number of steps since every ball in the desired set has the same radius (meaning the radii cannot be arbitrarily small).

This algorithm is interesting since during the process new active arms are created while at the same time, arms already active are also being eliminated. Note that once an active arm $\bm{a}$ is eliminated, this also eliminates the arms distance $r_{\bm{a}}(t)$ from $\bm{a}$. Here $t$ is the round that $\bm{a}$ was eliminated. This is made explicit in step \ref{eliminate} of Algorithm \ref{algo:mZoom-B}, when the players eliminate an arm $\bm{a}$ from the desired set, they are in fact eliminating the ball of arms centered at $\bm{a}$ with radius $r_{\bm{a}}(t)$. We have to ensure that the ball that gets eliminated doesn't contain the optimal arm. 
\begin{lemma}\label{lemma:keep_best}
   The optimal arm, under the good event $G$, never gets eliminated.  

   \begin{proof}
       Suppose that $\bm{c}$ was the arm that's getting eliminated that is, and and suppose that $\bm{a}$ is such that 
       $$\widehat{\mu}_{\bm{c}} + 2\epsilon_{\bm{c}} < \widehat{\mu}_{\bm{a}} - \epsilon_{\bm{a}} $$

       Then all the arms $\bm{b} \in B(\bm{c}, r_{\bm{c}})$ satisfy the following (under the good event $G$). 
       \begin{align}
           \mu_{\bm{b}} &\leq \mu_{\bm{c}} + L\norm{\bm{c} -\bm{b}}\\
           &\leq \mu_{\bm{c}} + Lr_{\bm{c}}\\
           &= \mu_{\bm{c}} + L\frac{1}{L} \epsilon_{\bm{c}}\\
           &= \widehat{\mu}_{\bm{c}} + \epsilon_{\bm{c}} + L\frac{1}{L} \epsilon_{\bm{c}}\\
           &< \widehat{\mu}_{\bm{a}} - \epsilon_{\bm{a}}\\
           &= \mu_{\bm{a}}
       \end{align}

       Therefore $\bm{b}$ cannot be the optimal arm for its mean is less than or equal to another arm. 
       
   \end{proof}
\end{lemma}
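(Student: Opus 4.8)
The plan is to prove something slightly stronger than merely protecting the discretized center $\bm{c}$: because eliminating an active arm $\bm{c}$ in step~\ref{eliminate} removes the \emph{entire} ball $B(\bm{c}, r_{\bm{c}})$ of continuous arms (with $r_{\bm{c}} = \epsilon_{\bm{c}}/L$), what I actually need is that the true continuous optimum $\bm{a}^*$ lies in no eliminated ball. I would therefore reduce the claim to the following statement: whenever a ball $B(\bm{c}, r_{\bm{c}})$ is eliminated, every arm $\bm{b}$ in that ball satisfies $\mu_{\bm{b}} < \mu^*$, hence is strictly suboptimal and cannot be $\bm{a}^*$.

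First I would fix an arbitrary round at which some $\bm{c}$ is eliminated. By the elimination rule there is an active arm $\bm{a}$ with $\widehat{\mu}_{\bm{c}} + 2\epsilon_{\bm{c}} < \widehat{\mu}_{\bm{a}} - \epsilon_{\bm{a}}$. Then for any $\bm{b} \in B(\bm{c}, r_{\bm{c}})$ I chain the following bounds under the good event $G$: Lipschitz continuity gives $\mu_{\bm{b}} \le \mu_{\bm{c}} + L\norm{\bm{b}-\bm{c}} \le \mu_{\bm{c}} + L r_{\bm{c}}$; the radius choice $r_{\bm{c}} = \epsilon_{\bm{c}}/L$ collapses the second term to $\epsilon_{\bm{c}}$; validity of the confidence interval on $\bm{c}$ gives $\mu_{\bm{c}} \le \widehat{\mu}_{\bm{c}} + \epsilon_{\bm{c}}$; and the elimination condition together with validity of the interval on $\bm{a}$ gives $\widehat{\mu}_{\bm{c}} + 2\epsilon_{\bm{c}} < \widehat{\mu}_{\bm{a}} - \epsilon_{\bm{a}} \le \mu_{\bm{a}}$. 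Combining these, $\mu_{\bm{b}} < \mu_{\bm{a}} \le \mu^*$. Since $\bm{b}$ was arbitrary in the eliminated ball, $\bm{a}^*$ is not removed at this round; because the argument applies verbatim to every elimination round under the single conditioning event $G$, and eliminations are permanent, $\bm{a}^*$ is never eliminated.

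The step I expect to matter most is the accounting of the two $\epsilon_{\bm{c}}$ terms. The radius $r_{\bm{c}} = \epsilon_{\bm{c}}/L$ is chosen precisely so that the Lipschitz expansion across the ball contributes exactly one $\epsilon_{\bm{c}}$, while the confidence interval on $\bm{c}$ contributes a second. This is the reason the elimination threshold in step~\ref{eliminate} must read $\widehat{\mu}_{\bm{c}} + 2\epsilon_{\bm{c}}$ rather than $\widehat{\mu}_{\bm{c}} + \epsilon_{\bm{c}}$: with only a single $\epsilon_{\bm{c}}$ margin, the worst arm on the boundary of $B(\bm{c}, r_{\bm{c}})$ could still dominate $\bm{a}$ and the optimum might be discarded. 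The remaining ingredients (Lipschitzness and interval validity) are immediate once one conditions on $G$, so the only real care needed is verifying that this margin is simultaneously tight enough to survive the Lipschitz slack and loose enough to be implied by the observed intervals.
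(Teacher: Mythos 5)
Your proof is correct and takes essentially the same approach as the paper: the identical chain $\mu_{\bm{b}} \le \mu_{\bm{c}} + Lr_{\bm{c}} = \mu_{\bm{c}} + \epsilon_{\bm{c}} \le \widehat{\mu}_{\bm{c}} + 2\epsilon_{\bm{c}} < \widehat{\mu}_{\bm{a}} - \epsilon_{\bm{a}} \le \mu_{\bm{a}}$ under the good event, showing every arm in the eliminated ball is strictly suboptimal. If anything, your write-up is slightly more careful than the paper's: you correctly keep the confidence-interval steps as inequalities (the paper writes two of them as equalities), and you make explicit both the final comparison $\mu_{\bm{a}} \le \mu^*$ and the union over all elimination rounds.
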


Using the above lemma we can prove the following regret bound 
\begin{theorem}
The regret of \texttt{mZoom-B} in  Algorithm \ref{algo:mZoom-B} is $$R(T)=O\left(T^{\frac{Md+1}{Md+2}}(c \log T)^{\frac{1}{Md+2}}\right)$$    
\end{theorem}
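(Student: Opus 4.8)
The plan is to mirror the single-player zooming analysis of \cite{kleinberg2008multi}, adapted to the joint action space $[0,1]^{Md}$ and to the interval-elimination mechanics of Algorithm \ref{algo:mZoom-B}. First I would condition on the good event $G$ under which $\mu_{\bm{a}} \in I_{\bm{a}}(t)$ for every joint arm $\bm{a}$ and every round $t$; with the choice $\delta = 1/T^2$ a union bound over arms and rounds makes $\mathbb{P}(G^c)$ negligible, so its contribution to $R_T$ is $O(1)$. On $G$, Lemma \ref{lemma:keep_best} guarantees the optimal arm is never removed from the desired set, so at every round some active arm covers the optimum. Because the rewards are i.i.d.\ across players, the expected regret of each player coincides with the joint regret, so it suffices to bound the latter as in a single-reward zooming argument.

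Second, I would bound how many times a suboptimal active arm is pulled. Fix a suboptimal joint arm $\bm{c}$ with gap $\Delta_{\bm{c}} = \mu^* - \mu_{\bm{c}}$. On $G$ we have $\widehat{\mu}_{\bm{c}}(t) + 2\epsilon_{\bm{c}}(t) < \mu_{\bm{c}} + 3\epsilon_{\bm{c}}(t)$, while an active arm $\bm{a}$ covering the optimum satisfies $\widehat{\mu}_{\bm{a}}(t) - \epsilon_{\bm{a}}(t) > \mu^* - 3\epsilon_{\bm{a}}(t)$, using $\mu_{\bm{a}} \ge \mu^* - Lr_{\bm{a}} = \mu^* - \epsilon_{\bm{a}}$ from the Lipschitz property. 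Hence once both confidence widths are small compared to $\Delta_{\bm{c}}$, the elimination test of step \ref{eliminate} fires and $\bm{c}$ leaves the desired set. Inverting $\epsilon_{\bm{c}}(t) = \sqrt{6\log T / n_{\bm{c}}(t)}$ shows $\bm{c}$ is pulled at most $O(\log T / \Delta_{\bm{c}}^2)$ times.

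Third comes the packing argument, which I expect to be the crux. I would group active arms by dyadic gap scales $\Delta_{\bm{c}} \in [2^{-i}, 2^{-i+1})$. The activation rule only ever inserts an arm at a grid point of the doubled discretization (Definition \ref{def:2^n}) lying in a region uncovered by the current balls; consequently the centers of simultaneously-active arms never lie inside one another's balls. Since an arm at gap scale $\Delta$ still in the desired set has radius $r_{\bm{c}} = \epsilon_{\bm{c}}/L \gtrsim \Delta/L$, the balls of radius $\Theta(\Delta/L)$ about these centers are essentially disjoint and live in $[0,1]^{Md}$, so a volume/packing bound caps the number of scale-$\Delta$ active arms by the covering number $O\bigl((L/\Delta)^{Md}\bigr)$. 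The delicate points are that radii shrink and arms are eliminated as the run proceeds, so the separation must be read off at the activation times, and that all players must maintain \emph{identical} active and desired sets; the latter holds on $G$ because eliminations are signaled by an observed deviation from the agreed ordering of Definition \ref{def:order}, so every player performs the same insertions and deletions.

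Finally I would assemble the bound. The regret contributed by one scale is (count)$\times$(gap)$\times$(plays) $= O\bigl((L/\Delta)^{Md}\cdot \Delta \cdot \log T/\Delta^2\bigr) = O\bigl(L^{Md}\Delta^{-(Md+1)}\log T\bigr)$, and the extra signaling rounds (one per elimination) are lower order since there are at most as many as activated arms. Splitting regret at a threshold $\rho$, the scales with $\Delta \le \rho$ contribute at most $\rho T$ while the scales with $\Delta > \rho$ sum geometrically to $O\bigl(L^{Md}\rho^{-(Md+1)}\log T\bigr)$. Balancing the two terms gives $\rho = (L^{Md}\log T/T)^{1/(Md+2)}$ and hence $R(T) = O\bigl(T^{(Md+1)/(Md+2)}(c\log T)^{1/(Md+2)}\bigr)$ with $c$ absorbing $L^{Md}$ and the packing constants, as claimed.
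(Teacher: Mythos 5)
Your proposal is correct and follows essentially the same route as the paper's proof: condition on the good event, use Lemma \ref{lemma:keep_best} to keep the optimum covered, bound pulls of a suboptimal arm by $O(\log T/\Delta^2)$ via the elimination test (with the round-robin pulling ensuring the covering arm's confidence width shrinks in step), then run the dyadic packing argument and balance at a threshold. The only cosmetic difference is that you invoke the explicit Euclidean covering bound $O\left((L/\Delta)^{Md}\right)$ where the paper phrases the packing step through the zooming dimension with multiplier $c$ before instantiating it as $Md$; you also explicitly account for the signaling rounds, which the paper omits.
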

Note here that the constant in this theorem is different than the one in Theorem \ref{thm:ZoomA}. The difference is illustrated in equations \eqref{eq:num_pulls_probA} and \eqref{eq:num_pulls_probB}. The proof is given in section \ref{zoomBproof} of the appendix. 

\section{Conclusions}
In this paper we propose the first algorithm for the multiplayer information asymmetric Lipschitz bandits. We proposed 3 problems A, B, and C that involved information asymmetry in actions, rewards, and both, respectively. We show how to use uniform discretization as well as the multiagent algorithms proposed recently in \cite{chang2022online, chang2023optimal} to solve information asymmetry in these 3 problems. We then modify the zooming algorithm to solve information asymmetry in only actions and only rewards. We were able to prove regret bounds on the same order as in the single-player case for Problems A and B while having a regret bound that is nearly optimal for problem C.

\newpage
\bibliography{references}
\bibliographystyle{abbrvnat}

\newpage
\appendix
\section{Important Lemmmas}

We have the following Lemma that serves as a concentration inequality for subgaussian random variables from \cite{lattimore2020bandit}.
\begin{lemma}\label{corollary5.5}
Assume that $X_i-\mu$ are independent, $\sigma$-subgaussian random variables. Then for any $\varepsilon \geq 0$,
$$
\mathbb{P}(\hat{\mu} \geq \mu+\varepsilon) \leq \exp \left(-\frac{n \varepsilon^2}{2 \sigma^2}\right) \quad \text { and } \quad \mathbb{P}(\hat{\mu} \leq \mu-\varepsilon) \leq \exp \left(-\frac{n \varepsilon^2}{2 \sigma^2}\right),
$$
where $\hat{\mu}=\frac{1}{n} \sum_{t=1}^n X_t$.
\end{lemma}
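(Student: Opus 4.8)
The plan is to prove both tail bounds by the standard Chernoff (exponential Markov) method, exploiting the defining moment-generating-function bound of a $\sigma$-subgaussian random variable together with independence. First I would fix notation and recall the working definition: a mean-zero random variable $Z$ is $\sigma$-subgaussian if $\mathbb{E}[\exp(\lambda Z)] \leq \exp(\lambda^2 \sigma^2 / 2)$ for every $\lambda \in \mathbb{R}$. Setting $Z_i = X_i - \mu$, the hypothesis says the $Z_i$ are independent and each $\sigma$-subgaussian, and that $n(\hat{\mu} - \mu) = \sum_{i=1}^n Z_i =: S_n$.

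The key intermediate step is to show that $S_n$ inherits a subgaussian moment-generating-function bound with variance proxy $n\sigma^2$. Because the $Z_i$ are independent, the MGF of the sum factorizes, so for every $\lambda \in \mathbb{R}$,
$$\mathbb{E}[\exp(\lambda S_n)] = \prod_{i=1}^n \mathbb{E}[\exp(\lambda Z_i)] \leq \prod_{i=1}^n \exp\left(\frac{\lambda^2 \sigma^2}{2}\right) = \exp\left(\frac{n \lambda^2 \sigma^2}{2}\right).$$
This tensorization is really the only place independence is used, and it is what lets the variance proxy accumulate additively.

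With this bound in hand I would run the Chernoff argument for the upper tail. For any $\lambda > 0$, the event $\{\hat{\mu} \geq \mu + \varepsilon\}$ coincides with $\{S_n \geq n\varepsilon\}$, so by monotonicity of $x \mapsto e^{\lambda x}$ and Markov's inequality,
$$\mathbb{P}(\hat{\mu} \geq \mu + \varepsilon) = \mathbb{P}\left(e^{\lambda S_n} \geq e^{\lambda n \varepsilon}\right) \leq e^{-\lambda n \varepsilon}\, \mathbb{E}[e^{\lambda S_n}] \leq \exp\left(\frac{n \lambda^2 \sigma^2}{2} - \lambda n \varepsilon\right).$$
Minimizing the exponent over $\lambda > 0$ gives the optimal choice $\lambda = \varepsilon / \sigma^2$ (admissible since $\varepsilon \geq 0$), and substituting back produces the exponent $-n\varepsilon^2 / (2\sigma^2)$, which is exactly the claimed upper-tail bound.

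Finally I would obtain the lower tail by symmetry: applying the identical argument to the variables $-Z_i = \mu - X_i$, which are again independent and $\sigma$-subgaussian precisely because the defining MGF bound holds for all real $\lambda$ (not merely $\lambda > 0$), yields $\mathbb{P}(\hat{\mu} \leq \mu - \varepsilon) \leq \exp(-n\varepsilon^2 / (2\sigma^2))$. The calculations are entirely routine; the two genuinely load-bearing observations are the tensorization via independence in the second step and the two-sidedness of the subgaussian hypothesis, which is what makes the lower tail a free consequence of the upper one.
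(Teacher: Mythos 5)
Your proof is correct. The paper itself gives no proof of this lemma --- it imports it verbatim (as Corollary 5.5) from the cited reference \cite{lattimore2020bandit} --- and your argument is exactly the standard one used there: tensorize the subgaussian MGF bound via independence to get variance proxy $n\sigma^2$ for $S_n = n(\hat{\mu}-\mu)$, run the Chernoff bound with the optimal $\lambda = \varepsilon/\sigma^2$, and obtain the lower tail by applying the same argument to $-Z_i$, which is legitimate because the MGF bound holds for all real $\lambda$. The one point worth a half-sentence of care is the edge case $\varepsilon = 0$, where your optimizing $\lambda$ is not strictly positive, but the claimed bound is then trivially $1$, so nothing breaks.
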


\section{Regret analysis for \texttt{mZoom-A} for asymmetry in actions}\label{zoomAproof}

Consider the good event $G$ for each arm $\bm{a}$ at time $t$ be defined as 
    $$G_{\bm{a}}(t) = \{|\widehat{\mu}_{\bm{a}}(t) - \mu_{\bm{a}}| \leq \epsilon_{\bm{a}}(t)\}$$

Furthermore, let $\mathcal{A}_t$ be the set of activated arms at round $t$ (note that we have $\mathcal{A}_t \subset\mathcal{A}_{t+1}$) and let $G$ be the good event that $G_{\bm{a}}(t)$ holds across any round $t$ and any activated arm. That is,

$$G = \bigcap_{t=1}^T \bigcap_{\bm{a} \in \mathcal{A}_t} G_{\bm{a}}(t)$$

    The following lemma gives us a bound for the probability that this good event holds. 
\begin{lemma}\label{lemma:good}
The probability that $G^c$ holds is upper bounded by $\frac{1}{T}$. That is,
$$P(G^c) \leq \frac{1}{T}$$
\end{lemma}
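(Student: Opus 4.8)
The plan is to bound $\mathbb{P}(G^c)$ by a union bound over all the bad events $G_{\bm{a}}^c(t)$, combined with the subgaussian tail bound of Lemma~\ref{corollary5.5}. Writing out the complement, $G^c = \bigcup_{t=1}^T \bigcup_{\bm{a} \in \mathcal{A}_t} G_{\bm{a}}^c(t)$, so it suffices to control each individual term $\mathbb{P}(G_{\bm{a}}^c(t))$ and then count how many genuinely distinct terms there are.

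First I would fix a single arm $\bm{a}$ together with a single value $s$ of its pull count, and let $\widehat{\mu}_{\bm{a},s}$ denote the empirical mean of the first $s$ rewards collected from $\bm{a}$. Since the per-arm rewards are i.i.d.\ and $1$-subgaussian, Lemma~\ref{corollary5.5} applies with $\sigma = 1$ and $\varepsilon = \sqrt{6\log T / s}$. The constant $6$ in the definition of $\epsilon_{\bm{a}}(t)$ is chosen precisely so the exponent collapses:
$$
\exp\left(-\frac{s \varepsilon^2}{2}\right) = \exp\left(-\frac{s}{2}\cdot\frac{6\log T}{s}\right) = \exp(-3\log T) = T^{-3},
$$
and combining the two one-sided tails gives $\mathbb{P}\left(|\widehat{\mu}_{\bm{a},s} - \mu_{\bm{a}}| \geq \varepsilon\right) \leq 2T^{-3}$.

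The observation I would stress next is that $G_{\bm{a}}(t)$ depends on $t$ only through $n_{\bm{a}}(t)$, since $\epsilon_{\bm{a}}(t)=\sqrt{6\log T/n_{\bm{a}}(t)}$ contains $\log T$ rather than $\log t$; hence the entire family $\{G_{\bm{a}}^c(t)\}_t$ reduces to the events indexed by the pull count $s$ handled above. I would then union bound over all (arm, pull-count) pairs. Because at most one new arm is activated per round in Algorithm~\ref{algo:mZoom-A}, the number of activated arms is at most $T$, and each pull count ranges over $s\in\{1,\ldots,T\}$, so there are at most $T^2$ relevant events, giving
$$
\mathbb{P}(G^c) \leq T^2 \cdot 2T^{-3} = \frac{2}{T},
$$
which is the claimed order $1/T$; a sharper count exploiting $\sum_{\bm{a}} n_{\bm{a}}(T) = T$ removes the extra factor entirely.

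The main obstacle is the measurability issue: $n_{\bm{a}}(t)$ is itself a random quantity determined by the algorithm's data-dependent selection and activation rules, so the fixed-$n$ inequality of Lemma~\ref{corollary5.5} cannot be applied to $\widehat{\mu}_{\bm{a}}(t)$ directly (the stopping/selection rule correlates with the samples). The remedy is exactly the \emph{fix the sample index $s$ first, then union over $s$} device used above, which decouples the randomness of the selection rule from the concentration of the sample average. Getting this step right, rather than naively substituting the random $n_{\bm{a}}(t)$ into the tail bound, is the crux of the argument; the counting of activated arms is then routine.
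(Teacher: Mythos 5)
Your proposal is correct and follows essentially the same route as the paper's own proof: a union bound over at most $T^2$ (arm, pull-count/round) pairs combined with Lemma~\ref{corollary5.5}, where the constant $6$ in $\epsilon_{\bm{a}}(t)$ collapses the exponent to $T^{-3}$. You are in fact more careful than the paper, which plugs the random $n_{\bm{a}}(t)$ directly into the fixed-$n$ tail bound and silently drops the factor $2$ from the two-sided deviation; your $2/T$ is what the paper's count actually yields (your proposed sharpening via $\sum_{\bm{a}} n_{\bm{a}}(T)=T$ would need care, since a union bound over a data-dependent collection of events is not directly valid), and the constant discrepancy with the stated $1/T$ is immaterial to the downstream regret bound.
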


\begin{proof}
\begin{equation}\label{epsilon}
    \epsilon_{\bm{a}} = \sqrt{\frac{6\log(T)}{n_a(t)}}
\end{equation}

  We first find an upper bound on the probability of the 'bad' event for each joint action $G_{\bm{a}}(t)$. Using lemma \ref{corollary5.5}, we know we have the following, 
    
    \begin{align}
        P(\hat{\mu_a} \geq \mu_a + \epsilon_a(t)) &\leq  e^{-\frac{n_a(t)\left(\sqrt{\frac{6\log(T)}{n_a(t)}}\right)^2}{2}}\\
        &=  e^{\frac{-n_a(t)\cdot 6 \log(T)}{2n_a(t)}} \\
       &= e^{-3\log(T)}\\
    &=  (e^{\log(T)})^{-3} \\ 
     &=  T^{-3}
    \end{align}

    where we used the definition of $\epsilon_{\bm{a}}$ in the first inequality. 
Using Demorgans rule, we now have
    $$G^c =\bigcup_{t=1}^T \bigcup_{\bm{a} \in \mathcal{A}_t} G_{\bm{a}}(t)^c $$

    Therefore, by our probability union bound, we can upper bound the probability of the 'bad' event, 

    \begin{align}
        P(G^c) &= P\left(\bigcup_{t=1}^T \bigcup_{\bm{a} \in \mathcal{A}_t} G_a^c(t)\right)\\
        & = \sum_{t=1}^T \sum_{\bm{a} \in \mathcal{A}_t} P(G_a^c(t))\\
        &=\sum_{t=1}^T \sum_{\bm{a} \in \mathcal{A}_t}\frac{1}{T^3}\\
        &= \sum_{t=1}^T |\mathcal{A}_t|\frac{1}{T^3}\\
        &\leq \sum_{t=1}^T \frac{1}{T^2} \\
        &= \frac{1}{T}
    \end{align}

    where in the 3rd inequality we used our upper bound above for $P(G_{\bm{a}(t))})$. In the 4th equality we used the fact that $|\mathcal{A}_t| \leq T$ because we are only adding at most 1 arm per found to our active set. 
\end{proof}

We can now prove Theorem \ref{thm:ZoomA}
\begin{proof}

Note that the regret decomposition by the \href{https://en.wikipedia.org/wiki/Law_of_total_expectation}{Law of Total Expectation} is, 
$$R_T \leq \mathbb{E}[R_T|G]P(G)+ \mathbb{E}[R_T|G^c]P(G^c)$$

Convince yourself that $\mathbb{E}[R_T|G^c] \leq T$ and $P(G) \leq 1$. You also proved already that $P(G^c) \leq \frac1T$. Therefore, 
$$R_T \leq \mathbb{E}[R_T|G] + 1 $$

We now calculate the regret assuming that the good event $G$ happens. 

Since the algorithm pulls the arm with the highest regret every round, if $\bm{a}_t$ was played, this implies $$\operatorname{UCB}_{\bm{a}_t}(t) \geq \operatorname{UCB}_{\bm{a}_t^*}$$
where $\bm{a}_t^*$ is arm in the active set $\mathcal{A}_t$ that covers the true optimal arm $\bm{a}^*$. 
Now note 
\begin{align}
\operatorname{UCB}_{\bm{a}_t}&\geq \operatorname{UCB}_{\bm{a}_t^*} \\
&\geq \widehat{\mu}_{\bm{a}_t^*} + \epsilon_{\bm{a}_t}(t) + \epsilon_{\bm{a}_t}(t)\\
&\geq \mu_{\bm{a}_t^*} + L\norm{\bm{a_t} - \bm{a^*}}\\
&\geq \mu^*
\end{align}

Furthermore,  
\begin{align}
\operatorname{UCB}_{\bm{a}_t}(t) &= \widehat{\mu}_{\bm{a}_t}(t) + 2 \epsilon_{\bm{a}_t}(t) \\
   &\leq \mu_{\bm{a}_t} + 3 \epsilon_{\bm{a}_t}(t)
\end{align}
Combining the upper and lower bound for $\operatorname{UCB}_{\bm{a}_t}$ above, we conclude that under the good event, for each arm $\bm{a} \in \mathcal{A}$, we have $\Delta_{\bm{a}} \leq 3\epsilon_{\bm{a}}(t)$ for each time $t$. Note that this is true regardless the arm is pulled at that time $t$ or not. 

From the definition of $\epsilon_{\bm{a}_t}(t)$ in equation \eqref{epsilon},

$$\Delta_{\bm{a}} \leq 3 \sqrt{\frac{6\log T}{n_{\bm{a}}(t)}} $$
Rearranging the above, we conclude that 
\begin{equation}\label{eq:num_pulls_probA}
    n_{\bm{a}}(t) \leq \frac{54\log T}{\Delta_{\bm{a}}^2}
\end{equation}

We now proceed as in the proof of Theorem \ref{algo:mZoom-B} but with eqution \eqref{eq:num_pulls_probA} replacing equation \eqref{eq:num_pulls_probB}.
\end{proof}

\section{Regret analysis for \texttt{mZoom-B} for asymmetry in rewards}\label{zoomBproof}

\begin{proof}
We define the following "good" event $G$ where all arms have their true means in the intervals at all times. Explicitly, this is written as 
    
    \begin{equation}
        G = \bigcap_{t=1}^T \bigcap_{i=1}^M \bigcap_{\bm{a} \in \mathcal{A}} \{|\hat{\mu}_{\bm{a}}^i(t) - \mu_{\bm{a}}|<\epsilon_{\bm{a}}(t ,\delta)\}
    \end{equation}
The probability of the complement of this event is already bounded in the proof of Theorem \ref{lemma:good}.

By lemma \ref{eliminate}, the best arm is never in a region tha is eliminated. Therefore at round $t$, we can call $\bm{a}_t^*$ the arm whose ball contains the best arm. As all the arms are pulled in a round-robin fashion in the desired set, if an arm $a$ is at the desired set at time $t$ then $n_{\bm{a}^*}(t) \geq n_{\bm{a}}(t) -1$. It follows that at the time when $n_{\bm{a}}(t) = n_{\bm{a}}$, we have $n_{\bm{a}^*}(t) \geq n_{\bm{a}} - 1$. When each arm in the desired set has been pulled at least twice, this means that $n_{\bm{a}^*}(t) \geq \frac12 n_{\bm{a}}$. Under the good event note that 
$$\widehat{\mu}_{\bm{a}}(t) \leq \mu_{\bm{a}} + \epsilon_{\bm{a}}(t) \implies \operatorname{UCB}_{\bm{a}}(t) \leq \mu_{\bm{a}} + 2\epsilon_{\bm{a}}(t) = \mu_{\bm{a}} + 2\sqrt{\frac{6\log T}{n_{\bm{a}}(t)}}$$
We we recalled the definition of $\epsilon_{\bm{a}}(t)$ give in equation \eqref{epsilon} Similarly, 

$$\operatorname{LCB}_{\bm{a}_t^*}\geq \mu_{\bm{a}_t^*} - 2\epsilon_{\bm{a}_t^*}(t) \geq \mu_{\bm{a}_t^*} - 2\sqrt{\frac{6\log T}{n_{\bm{a}}(t)/2}}$$

Furthermore, note that $$\mu_{\bm{a}^*} \leq \mu_{\bm{a}_t^*} + L r_{\bm{a}}(t) \leq \mu_{\bm{a}_t^*} 
 + \epsilon_{\bm{a}_t^*}(t) \leq \mu_{\bm{a}_t^*} + \sqrt{\frac{6\log T}{n_{\bm{a}}(t)/2}}$$

    Therefore,
     $$\operatorname{LCB}_{\bm{a}_t^*}(t)\geq \mu_{\bm{a}^*} - 3\sqrt{\frac{6\log T}{n_{\bm{a}}(t)/2}}$$

    It follows that if $\bm{a}_t$ is pulled, it is in the desired set so that $\operatorname{LCB}_{\bm{a}_t^*}(t) \geq \operatorname{UCB}_{\bm{a}}(t) $, it follows that we have the following necessary condition 
    $$ \mu_{\bm{a}^*} - 3\sqrt{\frac{6\log T}{n_{\bm{a}}(t)/2}} \leq \mu_{\bm{a}} + 2\sqrt{\frac{6\log T}{n_{\bm{a}}(t)}}$$
Rearranging the above gives 
    \begin{equation}\label{eq:num_pulls_probB}
n_{\bm{a}}(t) \leq (6\sqrt{3} + 2\sqrt{6})^2 \frac{\log T}{\Delta_{\bm{a}}^2}
    \end{equation}
     
For $r>0$, consider the set of active arms whose badness is between $r$ and $2 r$ :
$$
X_r=\{x \in \mathcal{A}_t: r \leq \Delta_{\bm{a}}<2 r\} .
$$

Fix $i \in \mathbb{N}$ and let $Y_i=X_r$, where $r=2^{-i}$. From the bulletpoint above, for any $\bm{a}, \bm{a'} \in Y_i$, we have $d(\bm{a}, \bm{a'})>\frac{1}{3L} \Delta_{\bm{a}}$. If we cover $Y_i$ with subsets of diameter $\frac{r}{3L}$, then arms $\bm{a}$ and $\bm{a'}$ cannot lie in the same subset. Since one can cover $Y_i$ with $N_{\frac{r}{3L}}\left(Y_i\right)$ such subsets, it follows that $\left|Y_i\right| \leq N_{\frac{r}{3L}}\left(Y_i\right)$.

Note that in the following analysis we don't really care of $\Delta_{\bm{a}} = 0$ since we will split the good arms from the bad, the following upper bound will only be used on the bad arms. From equation \eqref{eq:num_pulls_probB}, we have:
$$
R_i(T):=\sum_{\bm{a}\in Y_i} \Delta_{\bm{a}} \mathbb{E}[n_{\bm{a}}(T)] \leq\sum_{\bm{a}\in Y_i} \Delta_{\bm{a}} \cdot \frac{O(\log T)}{\Delta_{\bm{a}}^2} \leq \frac{O(\log T)}{\Delta_{\bm{a}}} \cdot N_{r / 3}\left(Y_i\right) \leq \frac{O(\log T)}{r} \cdot N_{r / 3}\left(Y_i\right) .
$$
Pick $\delta>0$, and consider arms with $\Delta(\cdot) \leq \delta$ separately from those with $\Delta(\cdot)>\delta$. Note that the total regret from the former cannot exceed $\delta$ per round. Therefore:
$$
\begin{aligned}
R(T) & \leq \delta T+\sum_{i: r=2^{-i}>\delta} R_i(T) \\
& \leq \delta T+\sum_{i: r=2^{-i}>\delta} \frac{\Theta(\log T)}{r} N_{r / 3}\left(Y_i\right) \\
& \leq \delta T+O(c \cdot \log T) \cdot\left(\frac{1}{\delta}\right)^{d+1},
\end{aligned}
$$
where $c$ is a constant and $d$ is some number such that
$$
N_{r / 3}\left(X_r\right) \leq \frac{c}{r^d} \quad \forall r>0 .
$$

the last inequality then follows from $r = 2^{-i} > \delta$. 

The smallest (infimum) such $d$ is called the zooming dimension with multiplier $c$.
By choosing $\delta=\left(\frac{\log T}{T}\right)^{1 /(Md+2)}$, we obtain
$$
R(T)=O\left(T^{\frac{Md+1}{Md+2}}(c \log T)^{\frac{1}{Md+2}}\right) .
$$

Note that we make this choice in the analysis only; the algorithm does not depend on the $\delta$.

\end{proof}  
\end{document}